%% file: example_paper.tex
\documentclass{article}

\usepackage{microtype}
\usepackage{graphicx}
\usepackage{subcaption}
\usepackage{booktabs} 

\usepackage{hyperref}

\usepackage[preprint]{icml2026}

\usepackage{amsmath}
\usepackage{amssymb}
\usepackage{mathtools}
\usepackage{amsthm}
\usepackage{multirow}
\usepackage{dcolumn}
\usepackage{booktabs}
\usepackage{graphicx}
\usepackage{subcaption}
\newcolumntype{d}{D{.}{.}{2}}

\usepackage[capitalize,noabbrev]{cleveref}

\theoremstyle{plain}
\newtheorem{theorem}{Theorem}[section]

\theoremstyle{definition}

\theoremstyle{remark}

\icmltitlerunning{TORQ: Two-Level Orthogonal Rotation for MXFP4 Quantization}

\makeatletter
\gdef\icmlcorrespondingauthor@text{Dawei Yang}
\makeatother

\begin{document}
\twocolumn[
  \icmltitle{TORQ: Two-Level Orthogonal Rotation for MXFP4 Quantization}

  \icmlsetsymbol{equal}{*}
  \icmlsetsymbol{corresponding}{\textdagger}

  \begin{icmlauthorlist}
    \icmlauthor{Zukang Xu}{equal,houmo}
    \icmlauthor{Xing Hu}{equal,houmo}
    \icmlauthor{Dawei Yang}{houmo,corresponding}
  \end{icmlauthorlist}

  \icmlaffiliation{houmo}{Houmo AI}

  \icmlkeywords{Large Language Models, Quantization, MXFP4, Post-Training Quantization, Orthogonal Rotation}

  \vskip 0.3in
]

\printAffiliationsAndNotice{\icmlEqualContribution \textsuperscript{\textdagger}Corresponding author.}

\input{sec_0.paper}

\end{document}

%% file: sec_0.paper.tex

\input{sec_1.abstract}
\input{sec_2.introduction}
\input{sec_3.related_work}
\input{sec_4.preliminaries}
\input{sec_5.method}
\input{sec_6.experiments}
\input{sec_7.conclusion}
\input{sec_8.limitation}

\clearpage
\section{Impact Statement}
This paper presents work whose goal is to advance the field of Machine Learning. There are many potential societal consequences of our work, none which we feel must be specifically highlighted here.
\bibliography{example_paper}
\bibliographystyle{icml2026}

\newpage
\appendix
\onecolumn
\input{sec_9.Appendix}

%% file: sec_1.abstract.tex
\begin{abstract}

As Large Language Models (LLMs) advance toward practical deployment, the Microscaling FP4 (MXFP4) format has emerged as a cornerstone for next-generation low-bit inference, owing to its ability to balance high dynamic range with hardware efficiency. However, directly applying MXFP4 to LLM activation quantization inevitably leads to significant accuracy degradation. In this paper, we theoretically analyze the error structure of MXFP4 activation quantization, revealing that the root cause of this performance drop lies in two structural imbalances between activation distributions and the MXFP4 block floating-point format: (1).Extreme Inter-block Variance Imbalance: Under fixed-block microscaling, inter-block variance is highly uneven. Specifically, a minority of high-variance blocks dominate the total quantization error and force the shared scaling factor upward, resulting in the excessive coarsening of numerous small-magnitude activations within the block. 
(2).Intra-block Codebook Utilization Imbalance: The heavy-tailed distribution of real-world activations severely mismatches the MXFP4 geometric progression codebook, leading to extremely low effective codeword utilization and leaving significant representation capacity idle.
To address these challenges, we propose TORQ (Two-level Orthogonal Rotation for MXFP4 Quantization), a training-free Post-Training Quantization (PTQ) framework designed to reshape the geometric properties of the activation space through optimal coordinate transformations. 
At the macroscopic level, TORQ leverages the Schur-Horn theorem to redistribute activation energy via inter-block orthogonal rotation. Mathematically formalized as variance equalization, this process acts as a soft outlier suppression mechanism—preventing high-variance blocks from driving up the shared scaling factors and thereby preserving the precision of small-magnitude elements.
At the microscopic level, TORQ employs maximum-entropy-guided intra-block rotation to alleviate "codebook collapse." Actively dispersing the highly concentrated activation distribution, it effectively "activates" underutilized codewords to maximize the MXFP4 codebook’s information capacity.
Experiments on various mainstream LLMs such as LLaMA3 and Qwen3 show that TORQ significantly improves the accuracy of MXFP4 activation quantization compared to existing SOTA methods: on the Qwen3-32B model, the perplexity on WikiText is reduced to 8.43 (7.61 for BF16), and the average accuracy is increased from 38.40\% with direct RTN to 73.63\% (74.82\% for BF16), successfully bridging the gap between 4-bit floating-point quantization and full-precision inference.
\end{abstract}

%% file: sec_2.introduction.tex
\section{Introduction}

With the widespread adoption of Large Language Models (LLMs) across various domains~\cite{qwen3, llama3}, their massive parameter counts and the memory bandwidth demanded during inference have become major bottlenecks for practical deployment~\cite{outliers}. To run LLMs efficiently on resource-constrained edge devices or in large-scale data centers, Post-Training Quantization (PTQ) has emerged as the industry standard paradigm~\cite{quik,rsavq,rwkvquant,mquant,moequant}. While traditional INT4 quantization has achieved significant success in weight compression, its limited dynamic range struggles to accommodate the long-tail outliers prevalent in Transformer activations~\cite{outliers}. Consequently, the industry is shifting toward low-precision floating-point formats with higher dynamic ranges through hardware co-design. Notably, the MXFP4 (e2m1) standard introduced by the Open Compute Project (OCP), which employs Block-wise Shared Exponent technology, offers representation capabilities approaching FP8 while maintaining 4-bit compression rates. Supported natively by next-generation hardware such as NVIDIA Blackwell and AMD Ryzen AI~\cite{ostquant,singlequant}, MXFP4 is widely regarded as the cornerstone of next-generation low-bit inference.

\begin{figure}[htbp] 
    \centering  
    \includegraphics[width=0.45\textwidth]{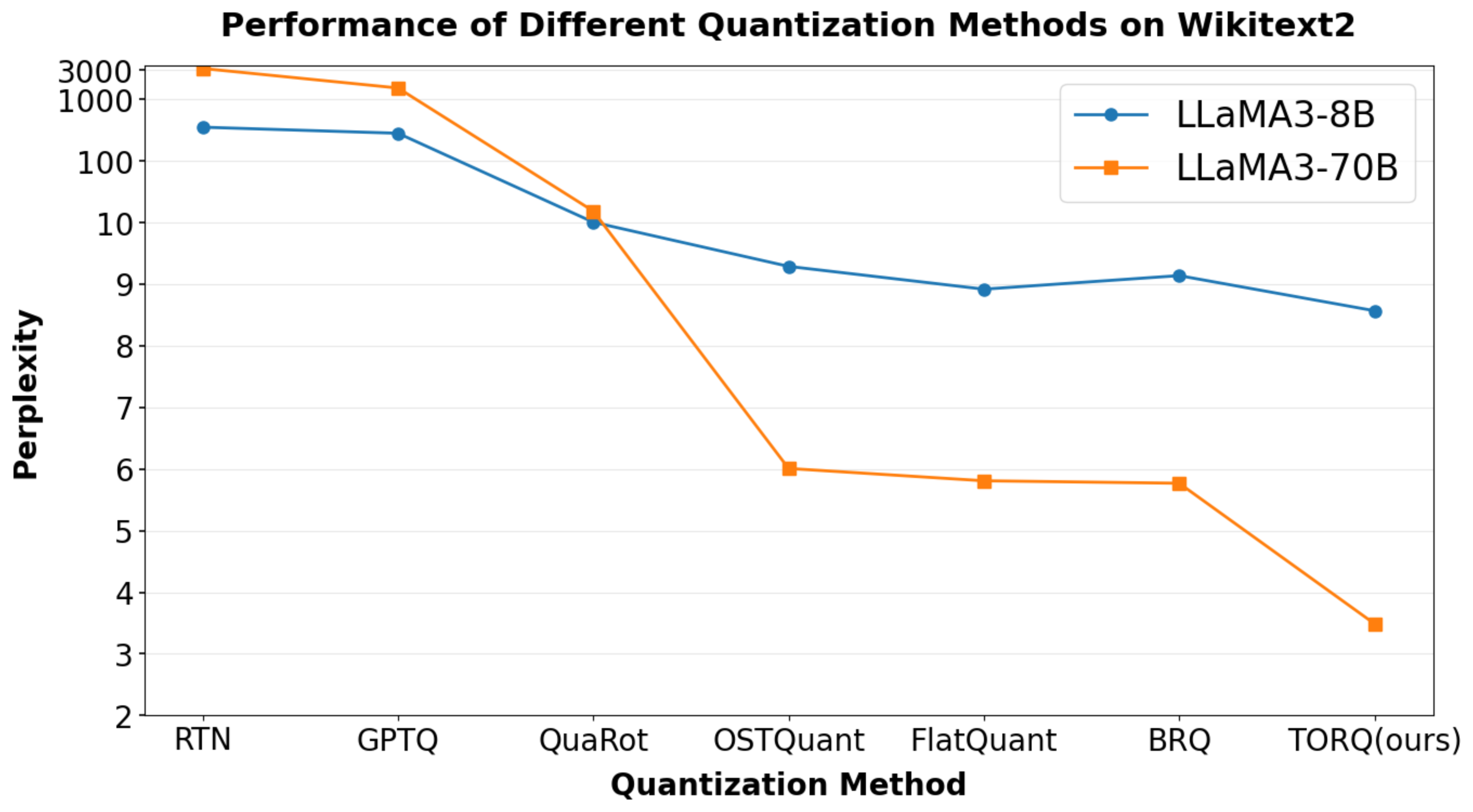}
    \caption{Performance of Different Quantization Methods on LLaMA3-8B and LLaMA3-70B.}
    \label{fig:quantization_performance}
\end{figure}
Despite the theoretical advantages of MXFP4, our empirical analysis reveals that directly applying it to LLM activation quantization encounters severe accuracy bottlenecks. State-of-the-art quantization methods, such as SingleQuant~\cite{singlequant}, DARTQuant~\cite{dartquant}, FlatQuant~\cite{flatquant}, and OSTQuant~\cite{ostquant}, have successfully addressed W4A4 integer quantization through rotation techniques. However, these methods are primarily designed to suppress outliers to fit the uniform grid of integers. This design overlooks the unique Block Floating Point (BFP) topological structure of MXFP4. As a result, simply transferring rotation strategies optimized for integers to MXFP4 often fails to exploit the potential of the floating-point format and may even be counterproductive.

\begin{figure*}[ht!] 
    \centering  
    \includegraphics[width=0.95\textwidth]{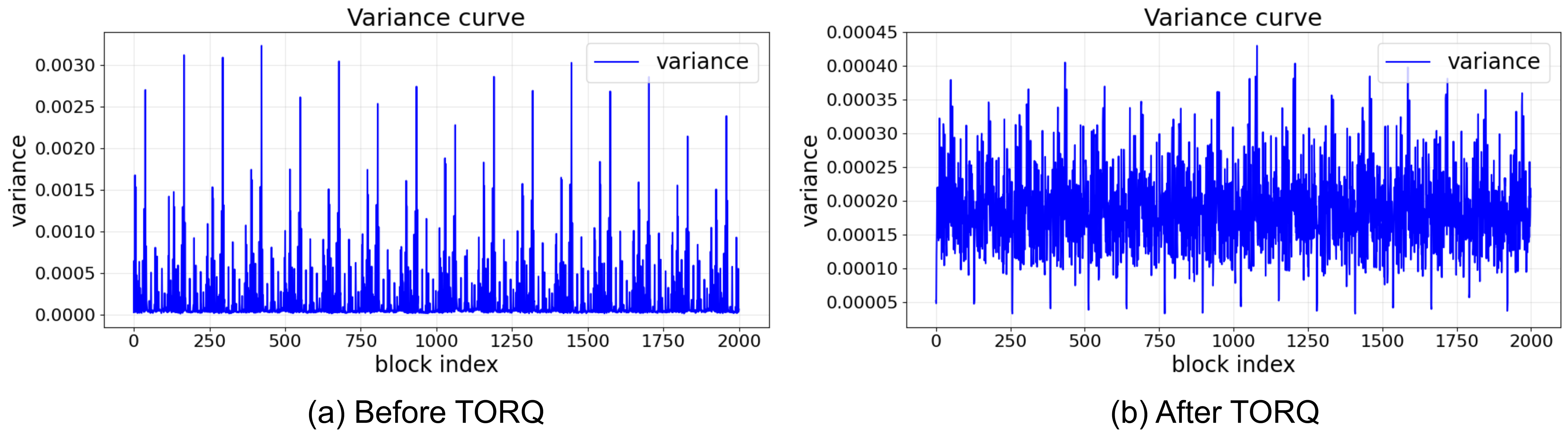}
    \caption{MXFP4 Quantization of each micro block's variance distribution: (a) Variance distribution of original data without TORQ, (b) Variance distribution after TORQ processing}
    \label{fig:motivation_1}
\end{figure*}
To address this challenge, this paper conducts an in-depth analysis of the error structure of MXFP4 activation quantization, revealing that the root cause of accuracy degradation lies in two structural conflicts between the activation distribution and the MXFP4 format: (1).Extreme Inter-block Variance Imbalance: The quantization accuracy of MXFP4 relies heavily on the block-wise shared scaling factor. When the activation vector is partitioned into fixed blocks (e.g., Block Size=32), as shown in Figure.\ref{fig:motivation_1}, block variance exhibits a significant heavy-tailed distribution. A minority of high-energy blocks not only dominate the total Mean Squared Error (MSE) but also force the shared exponent to rise sharply, causing over 90\% of small-magnitude elements within the same block to be zeroed out due to insufficient resolution. 
(2).Intra-block Codebook Utilization Imbalance (Codebook Collapse): The e2m1 codebook design of MXFP4 assumes data follows an ideal log-normal distribution. However, real-world activation distributions are often highly concentrated near zero, causing the vast majority of values to fall within the minimum quantization interval, while numerous codewords with strong representation capabilities for large values remain idle (i.e., Codebook Collapse). As shown in Figure.\ref{fig:motivation_2}, our statistics show that approximately 50\% of the codeword range is activated by only 10\% of the data in actual inference, resulting in severe bit waste.
\begin{figure*}[h] 
    \centering  
    \includegraphics[width=0.95\textwidth]{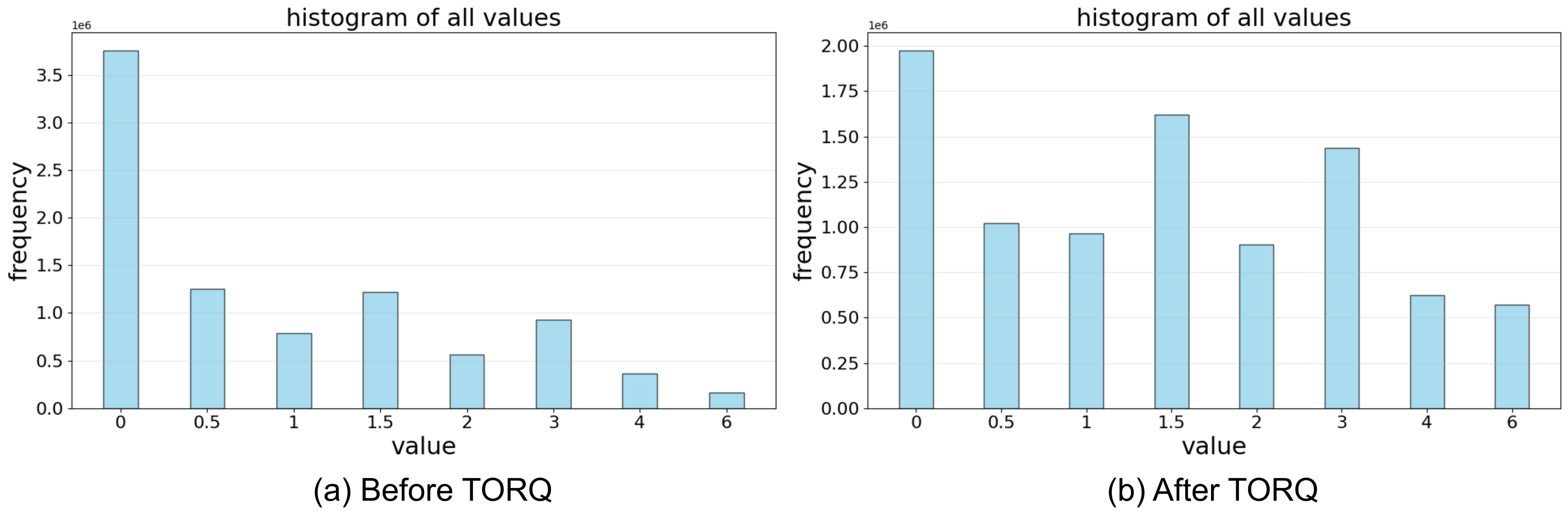}
    \caption{Histogram statistics of each codeword quantified by FP4(e2m1):(a) Codeword occupancy of the original data without TORQ, (b) Codeword occupancy after TORQ processing}
    \label{fig:motivation_2}
\end{figure*}

Based on these key insights, we propose TORQ (Two-level Orthogonal Rotation for MXFP4 Quantization). This is a retraining-free PTQ framework whose core philosophy is: Instead of passively adjusting quantization parameters to fit ill-conditioned activation distributions, we should actively rotate the activation coordinate system to align its distribution characteristics with the structural properties of MXFP4. 
TORQ decouples the optimization problem into two mathematically orthogonal sub-problems: 
(1).Macro-Equilibrium Rotation: To address inter-block variance imbalance, TORQ constructs an inter-block orthogonal transformation based on the Schur-Horn theorem to uniformly distribute activation energy across all blocks. We theoretically prove that this variance-equalized configuration is the optimal solution for minimizing global quantization error. (2).Micro-Alignment Rotation: To address low intra-block codebook utilization, TORQ introduces the principle of maximum entropy to execute orthogonal rotation within groups. This reshapes the local data distribution to "flow" uniformly into each codeword interval in the FP4 space, thereby maximizing effective information capacity.


TORQ enables the closed-form construction of rotation matrices using only a minimal volume of calibration data. Furthermore, the rotation matrices incorporated during the inference phase feature an extremely compact parameter footprint, imposing negligible additional computational overhead.Experimental evaluations on mainstream large language models (LLMs), including LLaMA and Qwen, demonstrate that TORQ achieves a significant performance superiority over prevailing state-of-the-art (SOTA) approaches. As illustrated in Figure.\ref{fig:quantization_performance}, this method consistently outperforms existing advanced quantization schemes across the entire Llama3 model series. Taking the Qwen3-8B model as a representative case: on the WikiText dataset, it attains a perplexity score of merely 10.26, representing a marginal increase of 1.26 compared with the full-precision floating-point baseline. Meanwhile, its average accuracy is boosted substantially from 30.73\% to 67.75\%, outperforming the FlatQuant method by a margin of 1.79 percentage points. Collectively, these results confirm that TORQ effectively mitigates the accuracy degradation challenge inherent to 4-bit floating-point quantization scenarios.
Our Contributions:
\begin{itemize}
    \item We systematically identify and formalize, for the first time, the two structural challenges hindering MXFP4 performance: Inter-block Variance Imbalance and Intra-block Codebook Utilization Imbalance.
    \item We propose TORQ, the first dual-level rotation framework explicitly designed for block floating-point structures. By integrating macro-variance equilibrium and micro-codebook alignment, it achieves a systematic minimization of quantization error.
    \item We establish a solid theoretical foundation for two-level rotation. We prove that under a convex error model, the variance equilibrium configuration is approximately optimal; meanwhile, we demonstrate that under the MXFP4 geometric structure, codeword occupancy equilibrium approximates the minimization of quantization error.
    \item We design a fine-tuning-free PTQ pipeline and verify the method's effectiveness, universality, and hardware friendliness across multiple LLMs and diverse downstream tasks.
\end{itemize}

%% file: sec_3.related_work.tex
\section{Related Work}
Traditional research on LLM quantization has primarily revolved around integer formats such as INT8 and INT4, focusing on reducing quantization accuracy loss through activation-aware strategies~\cite{illm,mambaquant}. For instance, GPTQ adjusts weight distributions via quantization-aware updates to mitigate the impact of outliers~\cite{gptq,gptaq}. AWQ proposes an activation-aware weight quantization strategy that protects salient weight channels based on activation statistics~\cite{awq}. SmoothQuant introduces scaling factors to redistribute the dynamic range between activations and weights, thereby alleviating overflow issues inherent to integer quantization~\cite{smoothquant}. However, these methods fundamentally combat the limitations of "fixed ranges" and fail to exploit the dynamic exponent characteristics of floating-point formats.

Recently, rotation techniques have emerged as a focal point in quantization research, representing the current state-of-the-art (SOTA) direction for W4A4 quantization. QuIP\#~\cite{quip_sharp} and QuaRot~\cite{quarot} introduce the Hadamard transform to smooth the distributions of weights and activations. While OSTQuant and FlatQuant have achieved SOTA results in W4A4 integer quantization, their objective functions aim to compress data dynamic range (Clip Ratio Optimization). This objective is incompatible with MXFP4, which requires fully utilizing dynamic range. Furthermore, although BRQ~\cite{brq} attempts rotation for MXFP4, it focuses solely on simple adjustments of intra-block distribution, neglecting the critical factor of inter-block energy imbalance.

The TORQ framework proposed in this paper is the first dual-level rotation framework explicitly designed for the structural characteristics of Block Floating Point (BFP). By combining "Macro-Equilibrium Rotation" and "Micro-Alignment Rotation," we systematically address the core challenges of MXFP4 activation quantization. This work addresses the lack of in joint inter-block and intra-block distribution optimization for block floating-point structures, establishing a new optimization paradigm for low-bit floating-point activation quantization.

%% file: sec_4.preliminaries.tex
\section{Preliminary}
\label{sec:preliminary}

\textbf{Notations.} Let $\mathbf{X} \in \mathbb{R}^{n \times d}$ denote the activation tensor of a specific layer, where $n$ is the number of tokens and $d$ is the feature dimension. We reshape $\mathbf{X}$ into $\mathbf{x} \in \mathbb{R}^{B \times K}$, representing $B$ blocks with $K$ elements each (typically $K=32$).

\textbf{MXFP4 Quantization Model.} The MXFP4 (e2m1) format employs a block-wise shared scaling factor. For a block vector $\mathbf{z}_b \in \mathbb{R}^K$, the quantization process is defined as:
\begin{equation}
\hat{\mathbf{z}}_b = s_b \cdot Q_{\text{mxFP4}}\left( \frac{\mathbf{z}_b}{s_b} \right)
\end{equation}
where $s_b = 2^{\lfloor \log_2 (\max |\mathbf{z}_b|) \rfloor}$ is the shared exponent, and $Q_{\text{mxFP4}}(\cdot)$ is the nearest-neighbor projection operator that maps normalized values to the e2m1 discrete codebook $\mathcal{C} = \{\pm 0, \pm 0.5, \pm 1, \pm 1.5, \pm 2, \pm 3, \pm 4, \pm 6\}$.

%% file: sec_5.method.tex
\section{Method}
\label{sec:method}
\begin{figure*}[h] 
    \centering  
    \includegraphics[width=0.95\textwidth]{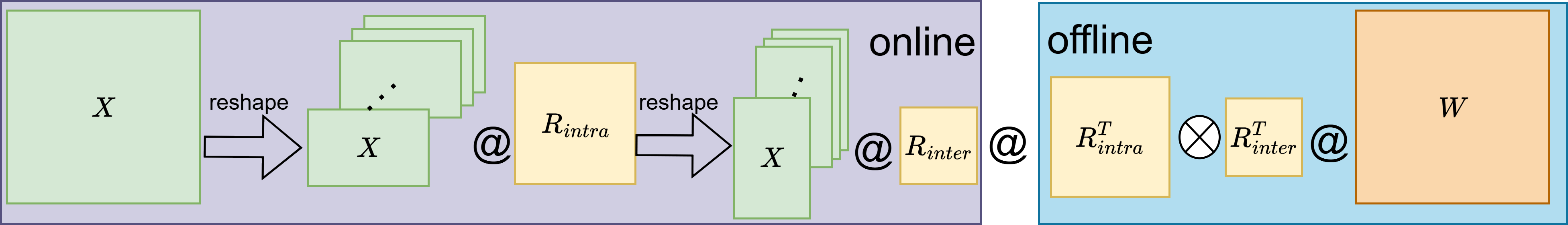}
    \caption{TORQ algorithm's complete process framework.}
    \label{fig:fram}
\end{figure*}

Our objective is to find an orthogonal transformation $\mathbf{R} \in \mathcal{O}(d)$ such that the quantization error of the rotated activations $\mathbf{Z} = \mathbf{X}\mathbf{R}$ in the MXFP4 format is minimized:
\begin{equation}
\min_{\mathbf{R} \in \mathcal{O}(d)} \mathbb{E} \left[ \| \mathbf{X}\mathbf{R} - Q_{\text{mxFP4}}(\mathbf{X}\mathbf{R}) \|_F^2 \right]
\end{equation}
Directly optimizing a high-dimensional orthogonal matrix $\mathbf{R}$ is computationally intractable and difficult to converge. To address this, TORQ decomposes $\mathbf{R}$ into two structured sub-transformations: Inter-block rotation $\mathbf{R}_{\text{inter}}$ and Intra-block rotation $\mathbf{R}_{\text{intra}}$, such that $\mathbf{R} = \mathbf{R}_{\text{inter}} \otimes \mathbf{R}_{\text{intra}}$. This decomposition not only reduces optimization complexity but, more importantly, achieves decoupled control over "energy" and "distribution." During the inference phase, the overhead of the inverse transformation can be nearly eliminated by fusing it with the weights of the adjacent linear layer. If the quantized activation $\hat{\mathbf{z}}$ is input to a linear layer $\mathbf{W} \in \mathbb{R}^{d' \times d}$ (where $d = B K$), the fused weight becomes $\mathbf{W}' = \mathbf{W} (\mathbf{R}_{\text{intra}}^\top \otimes \mathbf{R}_{\text{inter}}^\top)$, where $\otimes$ denotes the Kronecker product. Figure.\ref{fig:fram} illustrates the overall framework.

\subsection{Macro-Equilibrium Rotation: Inter-block Rotation based on the Schur-Horn Theorem}

The core objective of inter-block rotation is to achieve global equilibrium of variances across all blocks at the same position via orthogonal transformation, thereby eliminating the dominance of a few high-variance blocks over the total error. The design is based on rigorous convex optimization theory and an efficient engineering construction method.

\subsubsection{Theoretical Foundation: Optimality under Convex Error Models}

For the reshaped activation matrix with structure $B \times K$, assuming the expected mean within blocks is 0, we define the block variance vector $\boldsymbol{\sigma}^2 \in \mathbb{R}^B$, where $\sigma_b^2 = \|\mathbf{z}_b\|_2^2$.

\begin{theorem}[Optimality of Variance Equalization]
\label{thm:optimality}
Assuming the quantization MSE $\phi_b(\sigma^2)$ of block $b$ is a convex function of its input variance $\sigma^2$, the total MSE satisfies:
\begin{equation}
\min_{\mathbf{R}^{(k)} \in \mathcal{O}(B)} \sum_{b=1}^B \phi_b\left([\mathbf{R}^{(k)}\boldsymbol{\Sigma}_k\mathbf{R}^{(k)\top}]_{bb}\right) = \sum_{b=1}^B \phi_b\left(\frac{\mathrm{tr}(\boldsymbol{\Sigma}_k)}{B}\right)
\end{equation}
Equality holds if and only if $\mathrm{diag}(\mathbf{R}^{(k)}\boldsymbol{\Sigma}_k\mathbf{R}^{(k)\top}) = \frac{\mathrm{tr}(\boldsymbol{\Sigma}_k)}{B}\mathbf{1}$ (i.e., all block variances are equal), where $\mathcal{O}(B)$ denotes the group of orthogonal matrices of order $B$, and $\mathbf{1}$ is the all-ones vector.
\end{theorem}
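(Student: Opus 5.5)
The plan is a lower bound from trace invariance plus Jensen's inequality, and a matching construction via the Schur--Horn theorem. Write $\boldsymbol{\Sigma}_k = \mathbf{Q}\boldsymbol{\Lambda}\mathbf{Q}^\top$ with eigenvalues $\lambda_1,\dots,\lambda_B \ge 0$. For any $\mathbf{R}^{(k)}\in\mathcal{O}(B)$, set $d_b = [\mathbf{R}^{(k)}\boldsymbol{\Sigma}_k\mathbf{R}^{(k)\top}]_{bb}$. The first observation is trace invariance:
\begin{equation*}
\textstyle\sum_b d_b = \mathrm{tr}(\mathbf{R}^{(k)}\boldsymbol{\Sigma}_k\mathbf{R}^{(k)\top}) = \mathrm{tr}(\boldsymbol{\Sigma}_k).
\end{equation*}
So the feasible diagonals all have the same mean $\bar d = \mathrm{tr}(\boldsymbol{\Sigma}_k)/B$.

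\textbf{Step 1 (lower bound).} Read literally, the statement allows a different convex $\phi_b$ for each block. Then the claim can fail: a feasible diagonal that puts more variance on the block with the flatter $\phi_b$ can beat the uniform one. I will therefore state explicitly that all blocks share the same convex error profile, $\phi_b \equiv \phi$. This matches the paper's model, in which every block is quantized by the same MXFP4 operator and the error depends only on the input variance. Under this assumption, Jensen's inequality gives
\begin{equation*}
\textstyle\sum_b \phi(d_b) \ge B\,\phi(\bar d).
\end{equation*}
This bound holds for every orthogonal $\mathbf{R}^{(k)}$, so the minimum is at least the right-hand side of the theorem.

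\textbf{Step 2 (achievability).} Next I show that the constant vector $\bar d\,\mathbf{1}$ is an attainable diagonal. Via Schur--Horn, $\bar d\,\mathbf{1}$ is majorized by $\boldsymbol{\lambda}$, since every vector majorizes its own mean vector. The real-symmetric version of Schur--Horn therefore gives an orthogonal $\mathbf{R}^{(k)}$ with $\mathrm{diag}(\mathbf{R}^{(k)}\boldsymbol{\Sigma}_k\mathbf{R}^{(k)\top}) = \bar d\,\mathbf{1}$.

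I would also give a constructive version, which doubles as the engineering recipe. Take $\mathbf{R}^{(k)} = \mathbf{U}\mathbf{Q}^\top$, where $\mathbf{U}$ is orthogonal with $U_{bj}^2 = 1/B$ for all entries, for example a normalized Hadamard matrix when $B$ is a power of two. Then
\begin{equation*}
d_b = \textstyle\sum_j U_{bj}^2\lambda_j = \mathrm{tr}(\boldsymbol{\Sigma}_k)/B.
\end{equation*}
For general $B$ I fall back on Schur--Horn, or on a sequence of $2\times 2$ Givens rotations that successively equalize pairs of diagonal entries. Each such rotation preserves the trace and strictly reduces the spread, so a finite chain reaches the constant diagonal. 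Combining Steps 1 and 2 yields the stated equality.

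\textbf{Step 3 (equality case).} The ``if'' direction is immediate from Step 2. For ``only if'' I need strict convexity of $\phi$: in that case Jensen's inequality is tight only when all $d_b$ are equal. With mere convexity, $\phi$ could be affine on an interval containing the $d_b$, and then non-equal diagonals also attain the minimum. I will therefore add strict convexity as a hypothesis for the ``only if'' direction.

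\textbf{Main obstacle.} The substantive mathematical step is the existence of an orthogonal matrix realizing the constant diagonal. I would handle it first with the Hadamard-type construction and cite Schur--Horn, or use the Givens argument, for arbitrary $B$. The remaining difficulty is in the statement itself rather than the argument: the identical-$\phi$ and strict-convexity hypotheses must be made explicit for the result to be true.
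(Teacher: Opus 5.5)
Your proof is correct and is essentially the paper's: trace invariance, a convexity lower bound, and Schur--Horn plus an eigendecomposition-based construction for attainability. Your Jensen step is the paper's Karamata step specialized to the constant vector, and the paper's proof silently uses the same single $\phi$ and needs the same strict convexity for its equality case, so your extra hypotheses only make explicit what it already assumes.

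One small fix to your Givens fallback: rotating a pair to set both entries to their average only converges in the limit. For example, from diagonal $(1,0,0)$ every entry stays a dyadic rational and never equals $1/3$. Instead, rotate a pair with $d_i \ge \bar d \ge d_j$ until $d_i = \bar d$ exactly and freeze index $i$; this finishes in at most $B-1$ rotations.
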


\textit{Proof Sketch.} The Schur-Horn theorem~\cite{schur_horn} states that there exists an orthogonal matrix $\mathbf{R}^{(k)}$ such that the diagonal elements of the transformed covariance matrix can be any vector majorized by the eigenvalues. Under convex functions and constraints, the uniform allocation (constant vector) minimizes the objective function. The complete proof is provided in Appendix A.1.
Beyond the theoretical bounds established by convexity, the practical utility of variance equalization lies in suppressing local maxima. By redistributing energy from high-variance to low-variance blocks, TORQ acts as a dynamic 'soft clipper' that reduces the peak magnitudes determining the shared exponents ($s_b$)2. This effectively lowers the scaling factor ceiling, mitigating the severe resolution loss caused by outliers.



Proposition 4.2 (Empirical Error Behavior). While the strict convexity of the block MSE holds under high-rate quantization assumptions \cite{zamir}, in the coarse 4-bit regime, the error is dominated by the scaling factor $s_b$.

Analysis: The block-wise shared exponent $s
_b$ is determined by the maximum magnitude in the block. Outliers inflate $s_b$, increasing the quantization step size $\sigma=s_b \dot \sigma_{base}$
for all elements. By reducing the variance disparity between blocks (Theorem 4.1), TORQ effectively lowers the expectation of the maximum values, thereby minimizing the average scaling factor and reducing the global quantization noise. Although theoretical convexity is an approximation for discrete FP4, our extensive experiments confirm that variance equalization acts as an effective proxy for MSE minimization.

\subsubsection{Efficient Construction of Orthogonal Matrices: Schur-Horn Driven Givens Rotation}

Based on Theorem \ref{thm:optimality}, we need to construct an orthogonal matrix $\mathbf{R}_{\text{inter}}$ for all position $k$ such that $\mathrm{diag}(\mathbf{R}_{\text{inter}}\boldsymbol{\Sigma}_k\mathbf{R}^\top_{\text{inter}}) = c\mathbf{1}$, where $c=\mathrm{tr}(\boldsymbol{\Sigma}_k)/B$. We design an efficient iterative Givens rotation algorithm (Algorithm \ref{alg:givens}) that ensures convergence within $\mathcal{O}(B^2)$ time.

\begin{algorithm}[tb]
   \caption{Givens Rotation for Variance Equalization}
   \label{alg:givens}
\begin{algorithmic}[1]
   \STATE {\bfseries Input:} Covariance matrix $\boldsymbol{\Sigma} \in \mathbb{R}^{B \times B}$, convergence threshold $\epsilon$
   \STATE {\bfseries Output:} Orthogonal matrix $\mathbf{R} \in \mathcal{O}(B)$ such that $\mathrm{diag}(\mathbf{R}\boldsymbol{\Sigma}\mathbf{R}^\top) \approx c\mathbf{1}$
   \STATE Initialize $\mathbf{R} = \mathbf{I}_B$, calculate target $c = \mathrm{tr}(\boldsymbol{\Sigma})/B$
   \WHILE{$\max_i |\sigma_{ii} - c| > \epsilon$}
       \STATE Select block pair $(i, j)$ with opposite variance deviation signs, i.e., $(\sigma_{ii} - c)(\sigma_{jj} - c) < 0$
       \STATE Calculate rotation angle $\theta = \frac{1}{2}\arctan \frac{2\sigma_{ij}}{\sigma_{ii} - \sigma_{jj}}$
       \STATE Construct Givens rotation matrix $\mathbf{G}_{ij}(\theta)$
       \STATE Update $\boldsymbol{\Sigma} \leftarrow \mathbf{G}^\top \boldsymbol{\Sigma} \mathbf{G}$, $\mathbf{R} \leftarrow \mathbf{R}\mathbf{G}$
   \ENDWHILE
   \STATE {\bfseries return} $\mathbf{R}$
\end{algorithmic}
\end{algorithm}

\subsection{Micro-Alignment Rotation: Intra-block Rotation based on Codebook Alignment}

The goal of intra-block rotation is to reshape the intra-block activation distribution via orthogonal transformation in the normalized MXFP4 codeword space, ensuring that activation values uniformly occupy all available codewords, thereby maximizing the effective utilization of the 4-bit floating-point codebook.

\subsubsection{Computational Graph and Normalized FP4 Space}

Consider the activation matrix $\mathbf{X} \in \mathbb{R}^{B \times K}$ after inter-block rotation. We insert an identity transformation link into the computational graph:
\begin{equation}
\mathbf{X} \xrightarrow[]{\mathbf{S}^{-1}} \mathbf{X}_{\text{norm}} \xrightarrow[]{\mathbf{R}_{\text{intra}}} \mathbf{Z} \xrightarrow[]{Q_{\text{mxFP4}}} \hat{\mathbf{Z}} \xrightarrow[]{\mathbf{R}_{\text{intra}}^\top \mathbf{S}} \hat{\mathbf{X}}
\end{equation}
where $\mathbf{S} = \mathrm{diag}(s_1, \dots, s_B) \in \mathbb{R}^{B \times B}$ is the block-wise scaling matrix with shared scale $s_b$ for each block $b$, and $\mathbf{R}_{\text{intra}} \in \mathbb{R}^{K \times K}$ is the intra-block orthogonal rotation matrix. Under full precision, this link is strictly equivalent to an identity mapping. $\mathbf{S}^{-1}$ normalizes each block to the natural scale of MXFP4 (Normalized FP4 Space), and $\mathbf{R}_{\text{intra}}$ rearranges the statistical distribution of columns within this space.

\subsubsection{Codebook Occupancy Equilibrium Loss}

Let the set of positive codewords for MXFP4 (e2m1) be $\{c_1, \dots, c_J\}$ (where $J=8$), corresponding to decision boundaries $\{d_0=0, d_1, \dots, d_J=+\infty\}$. We define the \textbf{Codebook Occupancy Equilibrium Loss} as:
\begin{equation}
\mathcal{L}_{\text{code}}(\mathbf{S}, \mathbf{R}_{\text{intra}}) = \sum_{j=1}^{J} \left( \hat{p}_j(\mathbf{S}, \mathbf{R}_{\text{intra}}) - \frac{1}{J} \right)^2
\end{equation}
where $\hat{p}_j$ is the empirical codeword occupancy probability. This loss directly quantifies the "imbalance" of MXFP4 codeword usage, aligning precisely with the logarithmic geometric structure of MXFP4.

\subsubsection{Alternating Optimization Framework: S-step and R-step}

We employ an alternating coordinate descent strategy to iteratively optimize $\mathbf{S}$ and $\mathbf{R}_{\text{intra}}$.

\textbf{S-step (Update Block Scaling):} Fix $\mathbf{R}_{\text{intra}}$, and for each block $b$, map its maximum activation value to the maximum MXFP4 codeword $c_{\max}$:
\begin{equation}
s_b \leftarrow \text{clip\_to\_pow2}\left( \frac{\max_i |z_{b,i}|}{c_{\max}} \right)
\end{equation}

\textbf{R-step (Update Intra-block Rotation):} Fix $\mathbf{S}$ and update $\mathbf{R}_{\text{intra}}$ through a series of Givens rotations to monotonically decrease $\mathcal{L}_{\text{code}}$.
\begin{itemize}
    \item \textbf{Column Pair Selection:} Prioritize column pairs with high codebook imbalance and statistical complementarity based on the strategy detailed in Appendix A.5.
    \item \textbf{Angle Search:} For each selected pair $(p, q)$, solve for the optimal Givens rotation angle $\theta^\star$ via one-dimensional search: $\theta^\star = \arg\min_{\theta} \mathcal{L}_{\text{code}}(\mathbf{S}, \mathbf{R}_{\text{intra}} \mathbf{G}_{(p,q)}(\theta))$. Since the loss function is piecewise constant, the optimal solution can be found via finite enumeration (see Appendix A.4).
\end{itemize}

\begin{algorithm}[tb]
   \caption{Intra-block Rotation Construction (Micro Codebook Equilibrium)}
   \label{alg:intra}
\begin{algorithmic}[1]
   \STATE {\bfseries Initialize:} $\mathbf{R}_{\text{intra}}^{(0)} = \mathbf{I}_K$, $\mathbf{S}^{(0)} = \mathbf{I}_B$, $iter=0$
   \WHILE{$iter < MaxIter$ and $\Delta \mathcal{L}_{\text{code}} > \epsilon_{\text{intra}}$}
       \STATE \textbf{S-step:} Update scaling matrix $\mathbf{S}^{(iter+1)}$ using Eq. (5) based on current $\mathbf{R}_{\text{intra}}^{(iter)}$.
       \STATE \textbf{R-step:} Initialize $\mathbf{R}_{\text{current}} = \mathbf{R}_{\text{intra}}^{(iter)}$.
       \STATE Calculate imbalance $h_k$ and select top $K_{\text{top}}$ candidate columns.
       \STATE Select $P$ complementary pairs $\{(p_\ell, q_\ell)\}_{\ell=1}^P$ based on scores $H_{k,l}$.
       \FOR{$\ell=1$ to $P$}
           \STATE Compute optimal angle $\theta_\ell^\star$ for pair $(p_\ell, q_\ell)$ using exact solver.
           \STATE Update $\mathbf{R}_{\text{current}} \leftarrow \mathbf{R}_{\text{current}} \mathbf{G}_{(p_\ell, q_\ell)}(\theta_\ell^\star)$.
       \ENDFOR
       \STATE Update $\mathbf{R}_{\text{intra}}^{(iter+1)} = \mathbf{R}_{\text{current}}$.
       \STATE $iter \leftarrow iter + 1$.
   \ENDWHILE
   \STATE {\bfseries Output:} $\mathbf{R}_{\text{intra}}$, $\mathbf{S}$
\end{algorithmic}
\end{algorithm}

%% file: sec_6.experiments.tex
\section{Experiments}
\label{sec:experiments}

\subsection{Experiment Setup}

\textbf{Models and Datasets.} To comprehensively evaluate the universality of our algorithm, we selected the LLaMA-3~\cite{llama3} series and Qwen3~\cite{qwen3} series as foundation models, covering parameter scales from 8B to 70B and including both Dense and Mixture-of-Experts (MoE) architectures. Beyond standard perplexity evaluation on Wikitext2~\cite{wikitext2}, we assessed TORQ on a suite of zero-shot tasks, including ARC-Challenge and ARC-Easy~\cite{arc}, HellaSwag~\cite{hellaswag}, LAMBADA~\cite{lambada}, PIQA~\cite{piqa}, and WinoGrande~\cite{winogrande}. Furthermore, we evaluated performance on more challenging reasoning benchmarks, such as the multi-disciplinary knowledge task MMLU~\cite{mmlu}, the mathematical reasoning benchmark GSM8K~\cite{gsm8k}, and the code generation benchmark HumanEval~\cite{humaneval}.

\textbf{Baseline Methods.} We compared TORQ with current state-of-the-art low-bit quantization methods, including RTN, GPTQ~\cite{gptq}, QuaRot~\cite{quarot}, OSTQuant~\cite{ostquant}, FlatQuant~\cite{flatquant}, and BRQ~\cite{brq}. For fair comparison, all baseline methods were evaluated without GPTQ post-processing, examining solely the quantization performance of the methods themselves.

\textbf{Implementation Details.} All experiments were conducted on NVIDIA RTX 5090 GPUs. We applied MXFP4 quantization to both activations and weights. For calibration, we sampled 128 text segments from the Wikitext2 dataset. As TORQ is an efficient Post-Training Quantization (PTQ) framework, no fine-tuning was performed. For dense models (LLaMA3 and Qwen3), we evaluated MMLU, GSM8K, and HumanEval alongside other zero-shot tasks using the open-source \texttt{lm-evaluation-harness}~\cite{lm_eval}.

\subsection{Main Results}

Table \ref{tab:main_results} presents the main experimental results on the LLaMA3 and Qwen3 model families. \textbf{TORQ significantly outperforms SOTA methods across all tested models and tasks.} 
For example, on LLaMA3-8B, RTN and GPTQ nearly failed in the MXFP4 format (PPL > 280), confirming the challenge of directly applying MXFP4. In contrast, TORQ reduced the WikiText2 perplexity to \textbf{8.57}, surpassing the runner-up methods FlatQuant (8.92) and OSTQuant (9.29). In terms of average accuracy, TORQ achieved \textbf{69.32\%} on LLaMA3-8B, improving by over 37 percentage points compared to the baseline (RTN) and outperforming FlatQuant (68.34\%).

\input{tab_tab_main_results}

\subsection{Ablation Studies}

\textbf{Component Analysis.} To verify the contribution of individual components, we conducted ablation studies on Qwen3-8B (Table \ref{tab:ablation_component}). 
\begin{itemize}
    \item Using only Inter-block Rotation ($\mathbf{R}_{\text{inter}}$) reduced PPL from 3750.93 (RTN) to 24.73, indicating that resolving inter-block variance imbalance is fundamental for recovering model functionality.
    \item Using only Intra-block Rotation ($\mathbf{R}_{\text{intra}}$) lowered PPL to 20.36, demonstrating significant gains from optimizing codebook utilization.
    \item The complete TORQ ($\mathbf{R}_{\text{inter}} + \mathbf{R}_{\text{intra}}$) further reduced PPL to 10.26, boosting GSM8K accuracy to 85.95\%. This proves that macro-variance equilibrium and micro-geometric alignment are orthogonal and complementary.
\end{itemize}

\input{tab_tab_ablation_component}

\textbf{Calibration Dataset Robustness.} We evaluated the sensitivity of TORQ to different calibration datasets on Qwen3-14B (Table \ref{tab:ablation_data}). Results show minimal performance fluctuation (PPL variance < 0.8) across WikiText2, C4, and RedPajama, indicating that TORQ is robust and can capture statistical features from small amounts of generic text.

\input{tab_tab_ablation_data}

\textbf{Performance on Hard Tasks.} We assessed "deep intelligence" on MMLU, GSM8K, and HumanEval (Table \ref{tab:hard_tasks}). Traditional methods suffered severe degradation; for instance, GPTQ on Qwen3-8B achieved only 67.37\% on GSM8K and 30.84\% on HumanEval. TORQ recovered GSM8K accuracy to \textbf{86.82\%} (vs. 88.25\% FP16) and HumanEval to \textbf{61.15\%} (vs. 64.63\% FP16), thanks to the micro-alignment rotation preserving fine-grained features critical for reasoning.

\input{tab_tab_hard_tasks}
\vspace{-3mm}

\textbf{Generalizability to Other Formats.} We applied TORQ to MXINT4 and NVFP4 formats (Table \ref{tab:formats}). TORQ yielded significant improvements regardless of integer (MXINT4) or floating-point (NVFP4) constraints. Notably, on NVFP4, TORQ improved GSM8K accuracy on Qwen3-8B from 42.67\% (RTN) to 87.08\%, confirming the format-agnostic nature of our geometric optimization.

\input{tab_tab_formats}

\textbf{Adaptability to MoE Architectures.} Experiments on Qwen3-30B-A3B (MoE) demonstrated TORQ's effectiveness (Table \ref{tab:moe}). On GSM8K, TORQ achieved 83.83\%, far surpassing GPTQ (37.89\%) and QuaRot (74.18\%), proving its ability to mitigate the heterogeneity of activation distributions across experts.

\input{tab_tab_moe}

\subsection{Efficiency Analysis}

\textbf{Speedup and Memory Saving.} As shown in Table \ref{tab:speedup}, on Qwen3-30B-A3B with a 2k context length, TORQ (W4A4) achieves a decoding speed of 220 Toks/s, representing a \textbf{3.5$\times$ speedup} over BF16 (63 Toks/s), while reducing memory usage from 56.9GB to 15.3GB.

\input{tab_tab_speedup}

\textbf{Quantization Time Comparison.} Unlike training-dependent methods, TORQ's analytical construction is extremely efficient. As detailed in Table \ref{tab:time_cost}, TORQ calibrates Qwen3-32B in just 1920 seconds, which is approximately \textbf{10$\times$ faster than FlatQuant} and \textbf{7$\times$ faster than OSTQuant}, highlighting its practical deployment value.

\input{tab_tab_time_cost}

%% file: tab_tab_main_results.tex
\begin{table*}[h]
\centering
\caption{Main results on LLaMA3 and Qwen3 families. "Wiki2" reports Perplexity ($\downarrow$), and "Avg Acc" reports average accuracy on zero-shot tasks ($\uparrow$).}
\label{tab:main_results}
\resizebox{\textwidth}{!}{
\begin{tabular}{lcccccccccc}
\toprule
 & \multicolumn{2}{c}{\textbf{LLaMA3-8B}} & \multicolumn{2}{c}{\textbf{LLaMA3-70B}} & \multicolumn{2}{c}{\textbf{Qwen3-8B}} & \multicolumn{2}{c}{\textbf{Qwen3-14B}} & \multicolumn{2}{c}{\textbf{Qwen3-32B}} \\
\cmidrule(lr){2-3} \cmidrule(lr){4-5} \cmidrule(lr){6-7} \cmidrule(lr){8-9} \cmidrule(lr){10-11}
\textbf{Method} & Wiki2 & Avg Acc & Wiki2 & Avg Acc & Wiki2 & Avg Acc & Wiki2 & Avg Acc & Wiki2 & Avg Acc \\
\midrule
Baseline (FP16) & 6.14 & 73.18 & 2.87 & 79.95 & 9.00 & 70.43 & 8.64 & 73.81 & 7.61 & 74.82 \\
\midrule
RTN & 354.37 & 32.03 & 3180.43 & 30.92 & 3750.93 & 30.73 & 2219.90 & 35.22 & 274.93 & 38.40 \\
GPTQ & 282.29 & 33.60 & 1532.08 & 34.82 & 1633.38 & 35.54 & 368.24 & 39.91 & 59.31 & 57.62 \\
QuaRot & 10.16 & 62.13 & 15.38 & 62.87 & 23.50 & 55.35 & 20.11 & 60.21 & 18.86 & 61.23 \\
OSTQuant & 9.29 & 65.92 & 6.01 & 73.93 & 19.75 & 54.40 & 16.85 & 64.06 & 14.62 & 67.19 \\
FlatQuant & 8.92 & 68.34 & OOM & OOM & 11.21 & 66.96 & 13.32 & 69.87 & OOM & OOM \\
BRQ & 9.14 & 67.76 & - & - & 13.70 & 64.73 & - & - & - & - \\
\textbf{TORQ (ours)} & \textbf{8.57} & \textbf{69.32} & \textbf{3.49} & \textbf{76.28} & \textbf{10.26} & \textbf{67.75} & \textbf{12.87} & \textbf{70.51} & \textbf{8.43} & \textbf{73.63} \\
\bottomrule
\end{tabular}
}
\end{table*}

%% file: tab_tab_ablation_component.tex
\begin{table*}[h]
\centering
\caption{Ablation study of rotation components on Qwen3-8B.}
\label{tab:ablation_component}
{
\begin{tabular}{cc|cccccc}
\toprule
$\mathbf{R}_{\text{inter}}$ & $\mathbf{R}_{\text{intra}}$ & \textbf{Wiki2} & \textbf{ARC-C} & \textbf{ARC-E} & \textbf{Hella} & \textbf{PIQA} & \textbf{GSM8K} \\
\midrule
\multicolumn{2}{c|}{Baseline (FP16)} & 9.00 & 56.74 & 80.93 & 74.98 & 77.37 & 88.25 \\
\midrule
$\times$ & $\times$ & 3750.93 & 22.94 & 25.07 & 28.73 & 52.11 & 48.93 \\
$\checkmark$ & $\times$ & 24.73 & 40.33 & 60.18 & 55.30 & 61.46 & 65.98 \\
$\times$ & $\checkmark$ & 20.36 & 48.41 & 69.17 & 61.01 & 68.83 & 70.84 \\
$\checkmark$ & $\checkmark$ & \textbf{10.26} & \textbf{53.65} & \textbf{79.36} & \textbf{72.06} & \textbf{74.11} & \textbf{85.95} \\
\bottomrule
\end{tabular}
}
\end{table*}

%% file: tab_tab_ablation_data.tex
\begin{table}[h]
\centering
\caption{Robustness across different calibration datasets on Qwen3-14B.}
\label{tab:ablation_data}
\resizebox{0.5\textwidth}{!}{
\begin{tabular}{lcccc}
\toprule
\textbf{Calibration Data} & \textbf{Wiki2} & \textbf{ARC-C} & \textbf{ARC-E} & \textbf{Hella} \\
\midrule
Wiki2 & 12.87 & 55.27 & 81.74 & 76.83 \\
C4 & 13.63 & 56.11 & 81.09 & 78.02 \\
Red\_Pajama & 13.11 & 55.78 & 80.93 & 77.37 \\
\bottomrule
\end{tabular}
}
\end{table}

%% file: tab_tab_hard_tasks.tex
\begin{table}[h]
\centering
\caption{Performance on complex reasoning tasks (Qwen3-8B).}
\label{tab:hard_tasks}
\resizebox{0.5\textwidth}{!}{
\begin{tabular}{lcccc}
\toprule
\textbf{Method} & \textbf{MMLU} & \textbf{GSM8K} & \textbf{HumanEval} & \textbf{MathQA} \\
\midrule
Baseline & 74.71 & 88.25 & 64.63 & 49.65 \\
GPTQ & 50.29 & 67.37 & 30.84 & 36.92 \\
QuaRot & 68.88 & 79.30 & 53.37 & 41.12 \\
\textbf{TORQ (ours)} & \textbf{70.82} & \textbf{86.82} & \textbf{61.15} & \textbf{47.94} \\
\bottomrule
\end{tabular}
}
\end{table}

%% file: tab_tab_formats.tex
\begin{table}[!htbp]  
\centering
\renewcommand\arraystretch{1.15}  
\caption{Applicability to different microscaling formats (Qwen3-8B).}
\label{tab:formats}  
\resizebox{0.5\textwidth}{!}{
\begin{tabular}{c c c cccc}  
\toprule
\textbf{Format} & \textbf{Method} & \textbf{ARC-C} & \textbf{GSM8K} & \textbf{Hella} & \textbf{PIQA} & \textbf{AVG} \\
\midrule
\multirow{2}{*}{MXFP4} & RTN & 22.94 & 35.87 & 28.73 & 52.11 & 34.92 \\
                       & \textbf{TORQ} & \textbf{53.65} & \textbf{86.82} & \textbf{72.06} & \textbf{74.11} & \textbf{71.66} \\
\midrule
\multirow{2}{*}{MXINT4} & RTN & 23.83 & 37.90 & 30.15 & 50.83 & 35.68 \\
                        & \textbf{TORQ} & \textbf{51.98} & \textbf{85.12} & \textbf{72.72} & \textbf{73.51} & \textbf{70.83} \\
\midrule
\multirow{2}{*}{NVFP4} & RTN & 30.17 & 42.67 & 37.33 & 60.12 & 42.57 \\
                       & \textbf{TORQ} & \textbf{54.38} & \textbf{87.08} & \textbf{73.10} & \textbf{75.83} & \textbf{72.60} \\
\bottomrule
\end{tabular}
}
\end{table}

%% file: tab_tab_moe.tex
\begin{table*}[h]
\centering
\caption{Performance on MoE Architecture (Qwen3-30B-A3B).}
\label{tab:moe}
\begin{tabular}{lcccccc}
\toprule
\textbf{Method} & \textbf{ARC-E} & \textbf{ARC-C} & \textbf{Hella} & \textbf{GSM8K} & \textbf{WinG} & \textbf{PIQA} \\
\midrule
Baseline & 79.25 & 56.40 & 79.60 & 85.44 & 70.32 & 80.30 \\
RTN & 24.58 & 26.80 & 25.52 & 22.45 & 52.87 & 51.04 \\
GPTQ & 31.15 & 34.29 & 40.19 & 37.89 & 58.90 & 62.81 \\
QuaRot & 70.45 & 53.19 & 71.14 & 74.18 & 62.51 & 71.11 \\
OSTQuant & 75.84 & 54.03 & 77.23 & 80.28 & 65.80 & 75.09 \\
\textbf{TORQ (ours)} & \textbf{77.10} & \textbf{55.81} & \textbf{78.12} & \textbf{83.83} & \textbf{68.36} & \textbf{77.93} \\
\bottomrule
\end{tabular}
\end{table*}

%% file: tab_tab_speedup.tex
\begin{table}[h]
\centering
\caption{Inference speedup and memory footprint on Qwen3-30B-A3B.}
\label{tab:speedup}
\begin{tabular}{lcc}
\toprule
\textbf{Precision} & \textbf{Memory (GB)} & \textbf{Decoder Toks/s} \\
\midrule
BF16 & 56.9 & 63 \\
\textbf{TORQ (W4A4)} & \textbf{15.3} & \textbf{220} \\
\bottomrule
\end{tabular}
\end{table}

%% file: tab_tab_time_cost.tex
\begin{table}[h]
\centering
\caption{Comparison of quantization calibration time (seconds).}
\label{tab:time_cost}
\begin{tabular}{lccc}
\toprule
\textbf{Model} & \textbf{OSTQuant} & \textbf{FlatQuant} & \textbf{TORQ} \\
\midrule
Qwen3-8B & 2786s & 3247s & \textbf{427s} \\
Qwen3-14B & 5131s & 6489s & \textbf{893s} \\
Qwen3-32B & 14834s & 18619s & \textbf{1920s} \\
\bottomrule
\end{tabular}
\end{table}

%% file: sec_7.conclusion.tex
\section{Conclusion}
This paper addresses the core challenges faced by MXFP4, as a next-generation low-precision floating-point format, in the activation quantization of Large Language Models (LLMs) and proposes the TORQ two-level orthogonal rotation preprocessing framework. Through systematic analysis, it reveals the structural issues of uneven cross-block variance and unequal usage of intra-block FP4 codewords. We innovatively elevate quantization optimization from the level of local parameter adjustment to that of coordinate system design: Inter-block rotation, based on the Schur-Horn theorem and convex optimization theory, achieves the global balanced distribution of cross-block variance, eliminating the dominance of a few high-variance blocks over the total error; Intra-block rotation, through the collaborative design of $log2$ domain target guidance and linear domain orthogonal transformation, makes the distribution of activation logarithmic amplitudes approximate uniformity, maximizing the utilization of MXFP4 codewords. Theoretically, we establish strict proofs for the error lower bounds and optimization properties of the two-level rotations respectively; In practice, this method requires no fine-tuning, has low calibration costs, and is hardware-friendly. On a series of models such as LLaMA and Qwen, as well as various tasks including WikiText, MMLU, and GSM8K, it significantly narrows the precision gap between MXFP4 quantization and full-precision inference, with performance surpassing existing SOTA methods by 2–5\%. TORQ not only provides the first systematic solution for MXFP4 activation quantization but also constructs a new paradigm of "structure-aware orthogonal transformation optimization," laying a crucial foundation for the application of low-bit floating-point formats in the efficient deployment of LLMs.

%% file: sec_8.limitation.tex
\section{Limitation}
Although TORQ demonstrates an excellent trade-off between accuracy and efficiency, there are several limitations that warrant further exploration: First, the current intra-block rotation focuses on matching first-order statistics (the expectation of log magnitudes) and fails to fully utilize the higher-order statistical information of the activation distribution (such as kurtosis and skewness). In scenarios where some activation distributions are extremely asymmetric or multimodal, it may not fully unleash the expressive potential of FP4. Second, this paper uses a fixed block size (e.g., 32) for experiments, while the activation characteristics of different model layers and different tasks vary significantly. The dynamic selection mechanism for adaptive block sizes and its impact on quantization accuracy and hardware efficiency have not been thoroughly studied. Finally, although we verified the effectiveness of the method on MoE models, we did not design specific optimization strategies for the heterogeneity of activation distributions among Experts. For ultra-large-scale MoE models (e.g., with more than a thousand Experts), how to further improve cross-Expert quantization consistency remains to be explored. These limitations also point out directions for future work, such as the design of adaptive rotation combined with higher-order statistics and the joint optimization of block size and model characteristics.

%% file: sec_9.Appendix.tex
\section{Appendices}
\input{appendix_A.1}
\input{appendix_A.2}
\input{appendix_A.3}
\input{appendix_A.4}
\input{appendix_A.5}
\input{appendix_A.6}
\input{appendix_A.7}
\input{appendix_A.8}

%% file: appendix_A.1.tex
\subsection*{Appendix A.1: Proof of Theorem 1 (Optimality of Variance Equalization)}
\label{app:proof_thm1}

\begin{theorem}[Optimality of Variance Equalization]
Under the \textit{convex block error model} (i.e., the MSE $\phi_b(\sigma^2)$ of block $b$ is a convex function of $\sigma^2$), the total MSE satisfies:
\begin{equation}
\min_{\mathbf{R} \in \mathcal{O}(B)} \sum_{b=1}^B \phi_b \left( [\mathbf{R} \boldsymbol{\Sigma}_k \mathbf{R}^\top]_{bb} \right) = \sum_{b=1}^B \phi_b \left( \frac{\mathrm{tr}(\boldsymbol{\Sigma}_k)}{B} \right)
\end{equation}
Equality holds if and only if $\mathrm{diag}(\mathbf{R} \boldsymbol{\Sigma}_k \mathbf{R}^{\top}) = \frac{\mathrm{tr}(\boldsymbol{\Sigma}_k)}{B} \mathbf{1}$.
\end{theorem}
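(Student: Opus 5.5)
The plan is to prove the two inequalities separately: a lower bound valid for every orthogonal $\mathbf{R}$, and attainability of that bound by some $\mathbf{R}\in\mathcal{O}(B)$. Write $\mathbf{d}(\mathbf{R}) = \mathrm{diag}(\mathbf{R}\boldsymbol{\Sigma}_k\mathbf{R}^\top)$ and $c = \mathrm{tr}(\boldsymbol{\Sigma}_k)/B$. The only invariant used in the lower bound is the trace: $\sum_b d_b(\mathbf{R}) = \mathrm{tr}(\mathbf{R}\boldsymbol{\Sigma}_k\mathbf{R}^\top) = \mathrm{tr}(\boldsymbol{\Sigma}_k) = Bc$ for every orthogonal $\mathbf{R}$.

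\emph{Lower bound.} I would first make explicit the hypothesis that I believe the statement implicitly needs: the block error functions coincide, $\phi_b \equiv \phi$ for all $b$. This is natural, since every block is quantized by the same MXFP4 operator and so has the same variance-to-MSE map. With a common convex $\phi$, Jensen's inequality applied to the uniform average of the numbers $d_1,\dots,d_B$ gives
\begin{equation*}
\frac{1}{B}\sum_{b=1}^B \phi(d_b) \;\ge\; \phi\Bigl(\frac{1}{B}\sum_{b=1}^B d_b\Bigr) = \phi(c).
\end{equation*}
Hence $\sum_b \phi(d_b(\mathbf{R})) \ge \sum_b \phi(c)$ for all $\mathbf{R}\in\mathcal{O}(B)$.

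Equivalently, I could phrase this through majorization. The constant vector $c\mathbf{1}$ is majorized by every vector with the same sum, and $\mathbf{d}\mapsto\sum_b\phi(d_b)$ is Schur-convex when $\phi$ is convex. This phrasing connects directly to the Schur--Horn step below.

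\emph{Attainability.} Let $\boldsymbol{\lambda}$ be the eigenvalue vector of the symmetric positive semidefinite matrix $\boldsymbol{\Sigma}_k$. Its entries sum to $Bc$, so $c\mathbf{1} \prec \boldsymbol{\lambda}$. By the Schur--Horn theorem, every vector majorized by $\boldsymbol{\lambda}$ is the diagonal of some $\mathbf{Q}\,\mathrm{diag}(\boldsymbol{\lambda})\,\mathbf{Q}^\top$ with $\mathbf{Q}$ orthogonal; in the real symmetric case $\mathbf{Q}$ can indeed be taken real orthogonal. Write the spectral decomposition $\boldsymbol{\Sigma}_k = \mathbf{U}\,\mathrm{diag}(\boldsymbol{\lambda})\,\mathbf{U}^\top$ and set $\mathbf{R} = \mathbf{Q}\mathbf{U}^\top$. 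Then $\mathbf{d}(\mathbf{R}) = c\mathbf{1}$, and the bound is achieved, so the minimum equals $\sum_b\phi(c)$.

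For a constructive alternative that also justifies Algorithm~\ref{alg:givens}, I would argue by induction on the number of off-target diagonal entries. Pick indices $i,j$ with $d_i > c > d_j$. The diagonal entry of a $2\times 2$ Givens rotation of the $(i,j)$ principal submatrix varies continuously with $\theta$. Its value ranges between $d_i$ and $d_j$, which are its values at $\theta = 0$ and $\theta = \pi/2$. By the intermediate value theorem, some $\theta$ makes one of the two entries exactly $c$. This fixes one entry per step, so at most $B-1$ steps are needed.

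\emph{Equality case and main obstacle.} The ``if'' direction is immediate: if $\mathbf{d}(\mathbf{R}) = c\mathbf{1}$, the objective equals $\sum_b\phi(c)$. The ``only if'' direction is where I expect the main obstacle, because it is false under mere convexity. For example, if $\phi$ is affine on the relevant range, every feasible $\mathbf{d}$ attains the minimum. I would therefore strengthen the hypothesis to \emph{strict} convexity of $\phi$, under which Jensen's inequality is strict unless all $d_b$ are equal.

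I would also state clearly why the common-$\phi$ assumption is needed. If the $\phi_b$ genuinely differ, say $\phi_1(x)=x^2$ and $\phi_2(x)=2x^2$, the constrained minimum over $\{d_1+d_2 = Bc\}$ is attained at the non-uniform point determined by the KKT condition $\phi_1'(d_1)=\phi_2'(d_2)$, so the claimed identity fails. With $\phi_b\equiv\phi$ strictly convex, the proof then consists of Jensen for the bound and its equality case, plus Schur--Horn (or the Givens induction) for attainment.
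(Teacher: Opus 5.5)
Your proof is correct and is essentially the paper's argument: trace invariance, a convexity lower bound, and attainment via Schur--Horn with $\mathbf{R}=\mathbf{Q}\mathbf{U}^\top$ as in the paper's Step 5. Your Jensen step is the paper's Karamata step, which only ever uses $c\mathbf{1}\prec\mathbf{d}$ for vectors of equal sum, so the paper's Schur--Horn description of the feasible set is not needed for the bound. Your two added hypotheses are genuinely necessary. The paper applies Karamata with distinct $\phi_b$, which is valid only for a common $\phi$. It also reads the ``only if'' off Karamata's equality case, which requires strict convexity.

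One caveat on your aside: your IVT-chosen angle gives a correct elementary construction, but it does not justify Algorithm~\ref{alg:givens} as written. That algorithm's angle $\tfrac12\arctan\frac{2\sigma_{ij}}{\sigma_{ii}-\sigma_{jj}}$ is the Jacobi (off-diagonal-annihilating) angle and does not involve $c$. It equals $0$ whenever $\sigma_{ij}=0$, so on a diagonal $\boldsymbol{\Sigma}$ with unequal entries the algorithm never moves.
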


\begin{proof}
\textbf{Step 1: Problem Transformation.}
By the property of orthogonal transformations, the total variance is conserved:
\begin{equation}
\sum_{b=1}^B [\mathbf{R} \boldsymbol{\Sigma}_k \mathbf{R}^\top]_{bb} = \mathrm{tr}(\boldsymbol{\Sigma}_k) = \text{constant}
\end{equation}
Let $\mathbf{d} = \mathrm{diag}(\mathbf{R} \boldsymbol{\Sigma}_k \mathbf{R}^\top)$. The optimization problem can be reformulated as:
\begin{equation}
\min_{\mathbf{d} \in \mathcal{D}} \sum_{b=1}^B \phi_b(d_b) \quad \text{s.t.} \quad \sum_{b=1}^B d_b = \mathrm{tr}(\boldsymbol{\Sigma}_k)
\end{equation}
where $\mathcal{D}$ is the set of achievable diagonal vectors.

\textbf{Step 2: Application of Schur-Horn Theorem.}
The Schur-Horn theorem~\cite{schur_horn} states that $\mathcal{D}$ is exactly the set of vectors \textit{majorized} by the eigenvalues $\boldsymbol{\lambda} = \mathrm{eig}(\boldsymbol{\Sigma}_k)$. That is:
\begin{equation}
\mathcal{D} = \left\{ \mathbf{d} \in \mathbb{R}^B : \sum_{i=1}^j d_{[i]} \leq \sum_{i=1}^j \lambda_{[i]} \text{ for } j=1,\dots,B-1, \text{ and } \sum_{i=1}^B d_i = \sum_{i=1}^B \lambda_i \right\}
\end{equation}
where $d_{[i]}$ denotes the $i$-th largest component of $\mathbf{d}$.

\textbf{Step 3: Convex Optimization and Majorization Theory.}
According to Karamata's inequality for convex functions~\cite{karamata}: if $\phi$ is convex and $\mathbf{d} \prec \mathbf{e}$ ($\mathbf{d}$ is majorized by $\mathbf{e}$), then $\sum \phi(d_i) \le \sum \phi(e_i)$.
The constant vector $\mathbf{c} = \frac{\mathrm{tr}(\boldsymbol{\Sigma}_k)}{B} \mathbf{1}$ is the \textit{most uniform} vector in $\mathcal{D}$ (i.e., $\mathbf{c} \prec \mathbf{d}$ for any $\mathbf{d} \in \mathcal{D}$).
Since $\phi_b$ is convex, the inequality $\sum \phi_b(c) \le \sum \phi_b(d_b)$ holds for all $\mathbf{d} \in \mathcal{D}$.

\textbf{Step 4: Equality Condition.}
Equality holds if and only if:
1) $\mathbf{d} = \mathbf{c}$ (from the equality condition of Karamata's inequality).
2) $\mathbf{c} \in \mathcal{D}$ (from Schur-Horn, since $\mathbf{c}$ is majorized by $\boldsymbol{\lambda}$).
The condition that $\mathbf{c}$ is majorized by $\boldsymbol{\lambda}$ is equivalent to $\lambda_{\min} \le c \le \lambda_{\max}$, which is naturally satisfied as $\frac{\mathrm{tr}(\boldsymbol{\Sigma}_k)}{B}$ lies between the minimum and maximum eigenvalues.

\textbf{Step 5: Constructive Existence.}
By the Schur-Horn theorem, there exists an orthogonal matrix $\mathbf{R}$ such that $\mathrm{diag}(\mathbf{R} \boldsymbol{\Sigma}_k \mathbf{R}^\top) = \mathbf{c}$. This can be constructed via eigenvalue decomposition followed by a series of Givens rotations (as detailed in Algorithm 1) or via a closed-form Householder solution.
Specifically:
1) Perform eigen-decomposition $\boldsymbol{\Sigma}_k = \mathbf{U} \boldsymbol{\Lambda} \mathbf{U}^\top$.
2) Solve for $\mathbf{Q}$ such that $\mathrm{diag}(\mathbf{Q} \boldsymbol{\Lambda} \mathbf{Q}^\top) = \mathbf{c}$.
3) Let $\mathbf{R} = \mathbf{Q}\mathbf{U}^\top$.
\end{proof}

\textbf{Key Insight:} This theorem indicates that under reasonable convex error assumptions (as proven for MXFP4 in Appendix A.2), a rotation configuration that perfectly equalizes variances across all blocks at a given position (i.e., $d_b = \mathrm{tr}(\boldsymbol{\Sigma}_k)/B$ for all $b$) is the global optimal solution. This theoretically underpins the design goal of inter-block rotation: to eliminate the dominance of a few high-variance blocks and distribute the error uniformly.

%% file: appendix_A.2.tex
\subsection*{Appendix A.2: Convexity Analysis of the MXFP4 Quantization Error Model}
\label{app:convexity_analysis}

For 4-bit MXFP4 (e2m1) block floating-point quantization, the Mean Squared Error (MSE) of block $b$ is jointly determined by the \textbf{Distribution Shape} and \textbf{Dynamic Range Coverage}. According to asymptotic quantization theory \cite{zamir}, its lower bound is given by:
\begin{equation}
D_b \geq \underbrace{\frac{\Delta_{\min}^2}{12} \cdot K}_{\text{Granularity Error}} \cdot \underbrace{\exp\left( 2 \cdot \mathrm{Var}(\log_2 |\mathbf{a}_b|) \right)}_{\text{Shape Factor}} \cdot \underbrace{\exp\left( D_{\text{KL}}(f_U \| \mathcal{U}_{[L,U]}) \right)}_{\text{Matching Factor}}
\end{equation}
where:
\begin{itemize}
    \item $\Delta_{\min} = 2^{e_{\min}-1}$ is the minimum quantization step size.
    \item $\mathbf{a}_b = \mathbf{z}_b / s_b$ is the normalized activation vector.
    \item $f_U$ is the probability density function of the logarithmic magnitude $u = \log_2 |\mathbf{a}|$.
    \item $\mathcal{U}_{[L,U]}$ denotes the uniform distribution over the interval $[L, U]$, where $L=e_{\min}$ and $U=e_{\max} + \log_2 1.5$.
\end{itemize}

The total quantization error is defined as $D_{\text{total}} = \sum_{b=1}^B D_b$. Our objective is to minimize $D_{\text{total}}$ via orthogonal transformations:
\begin{equation}
\min_{\mathbf{R}_{\text{inter}}, \mathbf{R}_{\text{intra}}} \mathbb{E}\left[ \|\mathbf{x} - \hat{\mathbf{x}}\|_2^2 \right]
\end{equation}
Subject to the following constraints:
\begin{enumerate}
    \item Orthogonality: $\mathbf{R}_{\text{inter}} \in \mathcal{O}(B)$ and $\mathbf{R}_{\text{intra}} \in \mathcal{O}(K)$.
    \item Training-free: No model fine-tuning is required.
    \item Data-efficiency: Low calibration data requirement.
\end{enumerate}

%% file: appendix_A.3.tex
\subsection*{Appendix A.3: Inference Process}
\label{app:inference_process}

We present the online inference process of TORQ in Algorithm \ref{alg:inference}. The core advantage of TORQ is that the computational overhead of the inverse transformation can be absorbed into the linear layers, resulting in zero overhead during online inference.

\begin{algorithm}[h]
   \caption{TORQ Online Inference}
   \label{alg:inference}
\begin{algorithmic}[1]
   \STATE {\bfseries Input:}
   \STATE \quad Input activation vector $\mathbf{x} \in \mathbb{R}^d$ (where $d = B \cdot K$)
   \STATE \quad Offline parameters: $\mathbf{R}_{\text{inter}}$, $\mathbf{R}_{\text{intra}}$, $\mathbf{S}$
   \STATE \quad Weight matrix of the subsequent linear layer $\mathbf{W} \in \mathbb{R}^{d' \times d}$
   \STATE {\bfseries Output:} Quantized activation $\hat{\mathbf{x}}$ or output of the linear layer
   
   \STATE \textbf{1. Data Reshape:} Reshape $\mathbf{x}$ into matrix $\mathbf{X} \in \mathbb{R}^{B \times K}$.
   
   \STATE \textbf{2. Apply Inter-block Rotation:}
   
   \STATE Rotate: $\mathbf{Y} = \mathbf{R}_{inter} \mathbf{X}$
   
   \STATE Reassemble to obtain $\mathbf{Y} \in \mathbb{R}^{B \times K}$.
   
   \STATE \textbf{3. Apply Intra-block Rotation:}
   
   \STATE $\mathbf{Z} = \mathbf{Y} \mathbf{R}_{\text{intra}}$
   
   \STATE \textbf{4. MXFP4 Quantization:}
   \FOR{$b=1$ to $B$}
       \STATE Extract block: $\mathbf{z}_b = [z_{b,1}, \dots, z_{b,K}]^\top$
       \STATE Get shared scale: $s_b$ (from $\mathbf{S}$)
       \STATE Normalize: $\mathbf{a}_b = \mathbf{z}_b / s_b$
       \STATE Quantize: $\hat{\mathbf{a}}_b = Q_{\text{mxFP4}}(\mathbf{a}_b)$
       \STATE De-quantize: $\hat{\mathbf{z}}_b = s_b \cdot \hat{\mathbf{a}}_b$
   \ENDFOR
   \STATE Obtain quantized matrix $\hat{\mathbf{Z}} \in \mathbb{R}^{B \times K}$.
   
   \STATE \textbf{5. Inverse Transformation (Weight Fusion):}
   \STATE \textit{Option A (Explicit Calculation):} $\hat{\mathbf{X}} = \text{vec}(\mathbf{S} \hat{\mathbf{Z}} \mathbf{R}_{\text{intra}}^\top)$ (Incurs extra overhead).
   \STATE \textit{Option B (Recommended Weight Fusion):}
   \STATE Construct block diagonal matrix: $\mathbf{R}_{\text{inter}} \in \mathbb{R}^{d \times d}$.
   \STATE Fuse inverse rotation into weights (computed offline):
   $\mathbf{W}' = \mathbf{W} (\mathbf{R}_{\text{intra}}^\top \otimes \mathbf{R}_{\text{inter}}^\top)$
   \STATE where $\otimes$ denotes the Kronecker product.
   
   \STATE {\bfseries Return:} Directly use fused weights $\mathbf{W}'$ for subsequent computation with quantized activations (as vector $\text{vec}(\hat{\mathbf{Z}})$).
\end{algorithmic}
\end{algorithm}

%% file: appendix_A.4.tex
\subsection*{Appendix A.4: Exact Solver for Optimal Rotation Angle}
\label{app:exact_solver}

For a given column pair $(p, q)$, our objective is to find a scalar angle $\theta^\star \in [0, 2\pi)$ that minimizes the codebook equilibrium loss $\mathcal{L}_{\text{code}}(\theta)$. Since the quantization operator $Q_{\text{mxFP4}}$ has discrete step-like characteristics, the loss function $\mathcal{L}_{\text{code}}(\theta)$ exhibits a \textbf{Piecewise Constant} property with respect to $\theta$. This allows us to avoid gradient descent and instead obtain the global optimal solution by enumerating a finite set of "critical angles."

\textbf{1. Critical Angle Set Construction}

Consider the normalized activation matrix $\mathbf{X}_{\text{norm}} = \mathbf{S}^{-1}\mathbf{X}$. Let the vectors for the selected columns be $\mathbf{u} = \mathbf{X}_{\text{norm}[:, p]}$ and $\mathbf{v} = \mathbf{X}_{\text{norm}[:, q]}$. For each sample $b \in \{1, \dots, B\}$, we define its 2D polar coordinates:
\begin{equation}
r_b = \sqrt{u_b^2 + v_b^2}, \quad \varphi_b = \mathrm{atan2}(v_b, u_b) \in (-\pi, \pi]
\end{equation}
Under a Givens rotation $\mathbf{G}_{(p,q)}(\theta)$, the magnitude of the projected component on the first axis becomes:
\begin{equation}
x^{(1)}_b(\theta) = r_b |\cos(\varphi_b - \theta)|
\end{equation}
The quantization state of sample $b$ changes only when its projected magnitude crosses a decision boundary $d_j \in \{d_1, \dots, d_{J-1}\}$ of the MXFP4 format. The condition for crossing boundary $d_j$ is:
\begin{equation}
r_b |\cos(\varphi_b - \theta)| = d_j
\end{equation}
If $r_b \ge d_j$, there exist solutions. Let $\beta_{b,j} = d_j/r_b \in (0, 1]$ and $\alpha_{b,j} = \arccos(\beta_{b,j})$. The critical angles are given by:
\begin{equation}
\theta = \varphi_b \mp \alpha_{b,j} + 2\pi k
\end{equation}
By folding these solutions into the interval $[0, 2\pi)$, we obtain the set of critical angles $\mathcal{T}_{b,j}^{(1)}$ for the first axis. Similarly, for the second axis, using the phase shift $\varphi_b - \pi/2$, we obtain $\mathcal{T}_{b,j}^{(2)}$. The complete set of critical angles is:
\begin{equation}
\mathcal{T} = \bigcup_{b,j} \left( \mathcal{T}_{b,j}^{(1)} \cup \mathcal{T}_{b,j}^{(2)} \right)
\end{equation}

\textbf{2. Optimal Search via Finite Enumeration}

Sort the unique angles in $\mathcal{T}$ as $0 \le \tau_1 < \tau_2 < \dots < \tau_M < 2\pi$.
\textbf{Key Property:} Within any open interval $(\tau_m, \tau_{m+1})$, the quantization bin assignment for all samples remains constant, implying the loss function $\mathcal{L}_{\text{code}}$ is constant.

Therefore, we only need to evaluate the loss at the midpoint of each interval:
\begin{equation}
\tilde{\theta}_m = \frac{\tau_m + \tau_{m+1}}{2}, \quad m=1, \dots, M-1
\end{equation}
(Note: Include intervals wrapping around $2\pi$ as needed). The global optimal angle is:
\begin{equation}
\theta^\star = \mathop{\arg\min}_{\tilde{\theta} \in \{\tilde{\theta}_1, \dots, \tilde{\theta}_M\}} \mathcal{L}_{\text{code}}(\tilde{\theta})
\end{equation}
This approach guarantees finding the \textbf{global optimal} rotation angle for the column pair in $\mathcal{O}(|\mathcal{T}|)$ time. In practice, sampling a subset of blocks to estimate $\mathcal{T}$ significantly reduces computational cost while maintaining high accuracy.

%% file: appendix_A.5.tex
\subsection*{Appendix A.5: Heuristic Strategy for Column Pair Selection}
\label{app:column_selection}

In the R-step, selecting the most effective column pairs for rotation is critical for efficiency. We propose a greedy strategy based on \textbf{Imbalance} and \textbf{Complementarity} to prioritize pairs that yield the maximum potential for distribution reshaping.

\textbf{1. Column Imbalance Score}

Given the normalized activation matrix $\mathbf{Z} = \mathbf{S}^{-1}\mathbf{X}\mathbf{R}$, let $\mathbf{z}^{(k)}$ denote the $k$-th column vector. We first construct the codebook histogram for each column:
\begin{equation}
N_j^{(k)} = \sum_{b=1}^{B} \mathbb{1}\left( |z_{b,k}| \in \mathcal{I}_j \right), \quad \hat{p}_j^{(k)} = \frac{N_j^{(k)}}{\sum_{l=1}^J N_l^{(k)}}
\end{equation}
We define the \textbf{Imbalance Score} $h_k$ as the mean squared distance between the empirical distribution and the uniform distribution:
\begin{equation}
h_k = \sum_{j=1}^{J} \left( \hat{p}_j^{(k)} - \frac{1}{J} \right)^2
\end{equation}
A larger $h_k$ indicates that the column's activations are highly concentrated in a few codewords (e.g., collapsed to 0 or max), making it a priority target for optimization. We sort all columns by $h_k$ in descending order to form a priority queue.

\textbf{2. Pair Complementarity}

Simply selecting the two columns with the highest $h_k$ is suboptimal if they share similar skewness (e.g., both biased towards large values). We introduce \textbf{Complementarity} to measure the statistical difference between two columns.
We define the "anti-correlation" $c_{k,l}$ between column $k$ and column $l$ as:
\begin{equation}
c_{k,l} = - \sum_{j=1}^{J} \left( \hat{p}_j^{(k)} - \frac{1}{J} \right) \left( \hat{p}_j^{(l)} - \frac{1}{J} \right)
\end{equation}
A positive $c_{k,l}$ implies that when one column over-occupies a specific codeword, the other tends to under-occupy it. This "complementary" nature facilitates the efficient transfer of probability mass between the two columns via 2D rotation.

Combining both metrics, we define the \textbf{Pair Selection Score} $H_{k,l}$:
\begin{equation}
H_{k,l} = h_k + h_l + \lambda c_{k,l}
\end{equation}
where $\lambda > 0$ balances individual urgency and mutual compatibility.

\textbf{3. Selection Workflow}

In each R-step iteration, the selection proceeds as follows:
\begin{enumerate}
    \item \textbf{Filtering:} Calculate $h_k$ for all columns and select the top $K_{\text{top}}$ (e.g., $K/2$) most imbalanced columns as the candidate pool.
    \item \textbf{Pairing:} Compute $H_{k,l}$ for all pairs within the candidate pool.
    \item \textbf{Execution:} Select the top $P$ non-overlapping pairs with the highest $H_{k,l}$ scores.
    \item \textbf{Optimization:} Apply the exact angle solver (Appendix A.4) to these $P$ pairs to update $\mathbf{R}_{\text{intra}}$.
\end{enumerate}
This strategy ensures that limited computational resources are focused on the "most problematic" columns and the "most fixable" pairs.

%% file: appendix_A.6.tex
\subsection*{Appendix A.6: Relationship between FP4 Code Occupancy Optimization and Direct MSE Optimization}
\label{app:loss_comparison}

Compared to the approach of directly minimizing the MXFP4 quantization MSE:
\begin{equation}
\mathcal{L}_{\text{MSE}}(\mathbf{S},\mathbf{R}) = \mathbb{E}\left[ \|\mathbf{Z} - Q_{\text{mxFP4}}(\mathbf{Z})\|_F^2 \right]
\end{equation}
The proposed Codebook Occupancy Equilibrium Loss $\mathcal{L}_{\text{code}}$ offers three distinct advantages:

\begin{itemize}
    \item \textbf{Geometric Alignment:} $\mathcal{L}_{\text{code}}$ is defined entirely within the MXFP4 codeword space. It directly constrains the "usage frequency of each 4-bit codeword," aligning precisely with the "shared exponent + logarithmic quantization" structure unique to block floating-point formats\cite{mxfp}.
    \item \textbf{Optimization Smoothness:} The histogram-level loss is robust to individual outliers and is significantly smoother than point-wise MSE. In the context of coarse 4-bit quantization, $\mathcal{L}_{\text{MSE}}$ is heavily affected by the discrete quantization steps ("staircase" function), making optimization challenging due to local minima. $\mathcal{L}_{\text{code}}$ provides a smoother landscape for finding optimal rotation angles \cite{qsurvey}.
    \item \textbf{Implementation Efficiency:} The proposed method (Givens rotations + 1D angle search) requires only forward passes and counting operations. It eliminates the need for differentiable approximations, gradients, or backpropagation, making it a quintessential Post-Training Quantization (PTQ) calibration paradigm that is both fast and memory-efficient.
\end{itemize}

Empirical results demonstrate that intra-block rotation based on codeword occupancy equilibrium significantly improves MXFP4 activation quantization accuracy across various LLMs (LLaMA, Qwen). It consistently outperforms simple rotation strategies based directly on MSE in metrics such as WikiText perplexity and MMLU accuracy.

%% file: appendix_A.7.tex
\subsection*{Appendix A.7: Complexity Analysis}
\label{app:complexity_analysis}

\textbf{1. Offline Calibration Phase}

The calibration process involves constructing rotation matrices using a small set of calibration data (typically $T=128$ samples).

\textbf{Inter-block Rotation Construction:}
\begin{itemize}
    \item Covariance Estimation: Computing covariance matrices for all $K$ positions requires $\mathcal{O}(T B^2 K)$.
    \item Givens Iterations (Algorithm 1): Each position requires $\mathcal{O}(B^3)$ for convergence. Total for $K$ positions: $\mathcal{O}(B^3 K)$.
    \item \textbf{Total Inter-block Complexity:} $\mathcal{O}(T B^2 K + B^3 K)$.
\end{itemize}

\textbf{Intra-block Rotation Construction:}
\begin{itemize}
    \item S-step: Updating scaling factors requires scanning all data: $\mathcal{O}(TBK)$.
    \item R-step: In the worst case (full pairwise updates), it scales with $\mathcal{O}(TK^3)$. However, our heuristic column selection significantly reduces the constant factor.
    \item \textbf{Total Intra-block Complexity:} $\mathcal{O}(MaxIter \cdot (TBK + TK^3))$.
\end{itemize}

Given typical values ($T=128, B=32, K=32, MaxIter=10$), the total calibration cost is computationally efficient and completes within seconds to minutes on a single GPU.

\textbf{2. Online Inference Phase}

\textbf{Forward Rotation:}
Applying the rotations requires standard matrix multiplications:
\begin{itemize}
    \item Inter-block: $\mathcal{O}(B^2 K)$ (Block-diagonal multiplication).
    \item Intra-block: $\mathcal{O}(B K^2)$.
\end{itemize}

\textbf{Inverse Transformation (Zero Overhead):}
Crucially, the inverse transformations ($\mathbf{R}_{\text{intra}}^\top$ and $\mathbf{R}_{\text{inter}}^\top$) are fused into the weights of the subsequent linear layer offline (as described in Appendix A.3). Therefore, \textbf{no extra FLOPs} are incurred for the inverse step during online inference.

\textbf{3. Practical Deployment Suggestions}

\begin{itemize}
    \item \textbf{Memory Overhead:} Storing the rotation matrices requires $B^2$ parameters for inter-block (per position, actually decomposed into block-diagonal) and $K^2$ parameters for intra-block. For a configuration of $B=128, K=32$, this amounts to approximately 17k parameters, which is negligible compared to LLM weights.
    \item \textbf{Numerical Stability:} In Algorithm 1, we add a small perturbation $\delta \mathbf{I}$ (e.g., $\delta=10^{-8}$) to the covariance matrix to prevent singularity. In the S-step, we ensure scaling factors are non-zero.
    \item \textbf{Parallelization:}
    \begin{itemize}
        \item Inter-block rotations for different positions $k$ are independent and can be parallelized.
        \item Intra-block Givens rotations for disjoint column pairs are parallelizable.
        \item MXFP4 quantization is naturally parallel across blocks.
    \end{itemize}
    \item \textbf{Integration:} TORQ operates as an independent PTQ preprocessing module compatible with mainstream frameworks like PyTorch and TensorFlow, and can be easily integrated into existing quantization pipelines (e.g., GPTQ, AWQ).
\end{itemize}

%% file: appendix_A.8.tex
\subsection*{Appendix A.8: Detailed Main Experimental Results}
\label{app:main_experiments}

In this section, we provide the detailed breakdown of the experimental results on LLaMA3 and Qwen3 model families. We compare TORQ against baseline methods including RTN, GPTQ, QuaRot, OSTQuant, FlatQuant, and BRQ. The metrics reported are Perplexity (PPL) for WikiText2 (lower is better) and Zero-shot Accuracy (\%) for downstream tasks (higher is better).

\input{tab_tab_llama3_8b_full}

\input{tab_tab_llama3_70b_full}

\input{tab_tab_qwen3_8b_full}

\input{tab_tab_qwen3_14b_full}

\input{tab_tab_qwen3_32b_full}

%% file: tab_tab_llama3_8b_full.tex
\begin{table*}[h!]
\centering
\caption{Performance comparison on \textbf{LLaMA3-8B}. Wiki2 denotes perplexity ($\downarrow$), while other columns report accuracy ($\uparrow$).}
\label{tab:llama3_8b_full}
\begin{tabular}{lcccccccc}
\toprule
\textbf{Method} & \textbf{Wiki2} & \textbf{ARC-C} & \textbf{ARC-E} & \textbf{Hella} & \textbf{LAMBADA} & \textbf{PIQA} & \textbf{WinG} \\
\midrule
Baseline (FP16) & 6.14 & 53.15 & 77.56 & 79.12 & 75.51 & 80.57 & 73.16 \\
\midrule
RTN & 354.37 & 22.10 & 29.92 & 31.16 & 5.53 & 52.88 & 50.59 \\
GPTQ & 282.29 & 23.21 & 33.92 & 31.48 & 7.18 & 54.62 & 51.17 \\
QuaRot & 10.16 & 42.73 & 67.83 & 69.97 & 56.70 & 72.35 & 63.17 \\
OSTQuant & 9.29 & 45.21 & 69.69 & 72.69 & 67.52 & 73.86 & 66.53 \\
FlatQuant & 8.92 & 47.51 & 72.88 & 73.49 & 70.20 & 76.00 & 69.93 \\
BRQ & 9.14 & 46.57 & 72.04 & 72.41 & 71.02 & 75.51 & 68.98 \\
\textbf{TORQ (ours)} & \textbf{8.57} & \textbf{49.56} & \textbf{73.28} & \textbf{74.47} & \textbf{72.50} & \textbf{76.32} & \textbf{69.78} \\
\bottomrule
\end{tabular}
\end{table*}

%% file: tab_tab_llama3_70b_full.tex
\begin{table*}[h!]
\centering
\caption{Performance comparison on \textbf{LLaMA3-70B}.}
\label{tab:llama3_70b_full}
\begin{tabular}{lcccccccc}
\toprule
\textbf{Method} & \textbf{Wiki2} & \textbf{ARC-C} & \textbf{ARC-E} & \textbf{Hella} & \textbf{LAMBADA} & \textbf{PIQA} & \textbf{WinG} \\
\midrule
Baseline (FP16) & 2.87 & 64.25 & 85.94 & 84.93 & 79.37 & 84.44 & 80.74 \\
\midrule
RTN & 3180.43 & 25.09 & 31.44 & 26.73 & 1.57 & 51.96 & 48.70 \\
GPTQ & 1532.08 & 27.82 & 39.72 & 28.98 & 3.45 & 58.60 & 50.36 \\
QuaRot & 15.38 & 41.19 & 66.33 & 71.54 & 60.47 & 74.30 & 63.41 \\
OSTQuant & 6.01 & 58.29 & 79.39 & 80.25 & 72.90 & 78.73 & 74.01 \\
\textbf{TORQ (ours)} & \textbf{3.49} & \textbf{60.47} & \textbf{82.17} & \textbf{81.59} & \textbf{75.63} & \textbf{80.89} & \textbf{76.91} \\
\bottomrule
\end{tabular}
\end{table*}

%% file: tab_tab_qwen3_8b_full.tex
\begin{table*}[!ht]
\centering
\caption{Performance comparison on \textbf{Qwen3-8B}.}
\label{tab:qwen3_8b_full}
\begin{tabular}{lcccccccc}
\toprule
\textbf{Method} & \textbf{Wiki2} & \textbf{ARC-C} & \textbf{ARC-E} & \textbf{Hella} & \textbf{LAMBADA} & \textbf{PIQA} & \textbf{WinG} \\
\midrule
Baseline (FP16) & 9.00 & 56.74 & 80.93 & 74.98 & 64.18 & 77.37 & 68.35 \\
\midrule
RTN & 3750.93 & 22.94 & 25.07 & 28.73 & 5.05 & 52.11 & 50.50 \\
GPTQ & 1633.38 & 26.67 & 32.91 & 35.99 & 9.64 & 56.90 & 51.15 \\
QuaRot & 23.50 & 41.10 & 61.78 & 58.74 & 51.18 & 63.39 & 55.90 \\
OSTQuant & 19.75 & 38.53 & 63.29 & 55.59 & 52.27 & 61.84 & 54.88 \\
FlatQuant & 11.21 & 51.02 & 78.39 & 73.21 & 62.64 & 73.37 & 63.10 \\
BRQ & 13.70 & 48.27 & 76.22 & 70.34 & 60.31 & 70.82 & 62.39 \\
\textbf{TORQ (ours)} & \textbf{10.26} & \textbf{53.65} & \textbf{79.36} & \textbf{72.06} & 60.38 & \textbf{74.11} & \textbf{66.91} \\
\bottomrule
\end{tabular}
\end{table*}

%% file: tab_tab_qwen3_14b_full.tex
\begin{table*}[h!]
\centering
\caption{Performance comparison on \textbf{Qwen3-14B}.}
\label{tab:qwen3_14b_full}
\begin{tabular}{lcccccccc}
\toprule
\textbf{Method} & \textbf{Wiki2} & \textbf{ARC-C} & \textbf{ARC-E} & \textbf{Hella} & \textbf{LAMBADA} & \textbf{PIQA} & \textbf{WinG} \\
\midrule
Baseline (FP16) & 8.64 & 60.49 & 83.08 & 78.82 & 67.84 & 79.76 & 72.85 \\
\midrule
RTN & 2219.90 & 26.89 & 30.67 & 33.44 & 9.50 & 55.78 & 55.04 \\
GPTQ & 368.24 & 36.21 & 31.98 & 38.77 & 15.78 & 58.81 & 57.90 \\
QuaRot & 20.11 & 43.83 & 70.42 & 64.80 & 54.71 & 66.87 & 60.64 \\
OSTQuant & 16.85 & 48.79 & 70.90 & 70.36 & 58.99 & 70.47 & 64.83 \\
FlatQuant & 13.32 & 54.83 & 79.31 & 76.04 & 64.98 & 75.98 & 68.07 \\
\textbf{TORQ (ours)} & \textbf{12.87} & \textbf{55.27} & \textbf{81.74} & \textbf{76.83} & \textbf{64.29} & \textbf{76.01} & \textbf{68.94} \\
\bottomrule
\end{tabular}
\end{table*}

%% file: tab_tab_qwen3_32b_full.tex
\begin{table*}[h!]
\centering
\caption{Performance comparison on \textbf{Qwen3-32B}.}
\label{tab:qwen3_32b_full}
\begin{tabular}{lcccccccc}
\toprule
\textbf{Method} & \textbf{Wiki2} & \textbf{ARC-C} & \textbf{ARC-E} & \textbf{Hella} & \textbf{LAMBADA} & \textbf{PIQA} & \textbf{WinG} \\
\midrule
Baseline (FP16) & 7.61 & 61.09 & 83.21 & 82.60 & 67.24 & 81.99 & 72.77 \\
\midrule
RTN & 274.93 & 37.85 & 33.27 & 34.81 & 10.80 & 55.75 & 57.92 \\
GPTQ & 59.31 & 41.74 & 68.94 & 62.18 & 51.11 & 62.89 & 58.84 \\
QuaRot & 18.86 & 45.19 & 70.71 & 69.36 & 55.93 & 68.07 & 58.27 \\
OSTQuant & 14.62 & 54.44 & 73.28 & 76.12 & 57.38 & 74.93 & 66.99 \\
\textbf{TORQ (ours)} & \textbf{8.43} & \textbf{60.11} & \textbf{82.28} & \textbf{82.39} & \textbf{65.87} & \textbf{80.01} & \textbf{71.14} \\
\bottomrule
\end{tabular}
\end{table*}